\documentclass{article}
\usepackage[preprint]{neurips_2026}

\usepackage[utf8]{inputenc}
\usepackage[T1]{fontenc}
\usepackage{hyperref}
\usepackage{url}
\usepackage{booktabs}
\usepackage{amsfonts}
\usepackage{amsmath}
\usepackage{amssymb}
\usepackage{nicefrac}
\usepackage{microtype}
\usepackage{xcolor}
\usepackage{algorithm}
\usepackage{algpseudocode}
\usepackage{graphicx}
\usepackage{subcaption}
\usepackage{wrapfig}
\usepackage{enumitem}
\usepackage{amsthm}
\usepackage{bm}
\usepackage{placeins}

\setlist{topsep=2pt,parsep=0pt,partopsep=0pt}
\newcommand{\fdot}{\dot{f}}
\newcommand{\SlotFlow}{\textsc{SlotFlow}}
\newcommand{\CLEAN}{\textsc{clean}}
\newcommand{\RR}{\mathbb{R}}
\newcommand{\NN}{\mathcal{N}}
\newcommand{\EE}{\mathbb{E}}
\newcommand{\vtheta}{\boldsymbol{\theta}}
\newcommand{\vx}{\mathbf{x}}
\newcommand{\vc}{\mathbf{c}}

\newcommand{\vh}{\mathbf{h}}
\newcommand{\vq}{\mathbf{q}}
\newcommand{\vk}{\mathbf{k}}
\newcommand{\vv}{\mathbf{v}}
\newcommand{\vs}{\mathbf{s}}

\newtheorem{proposition}{Proposition}
\newtheorem{definition}{Definition}

\title{When Attention Collapses:\\Residual Evidence Modeling for Compositional Inference}

\author{
  Niklas Houba \\
  Institute for Particle Physics and Astrophysics\\
  ETH Zurich\\
  \texttt{nhouba@phys.ethz.ch}
}

\begin{document}

\maketitle

\begin{abstract}
Compositional inference -- the decomposition of observations into an unknown number of latent components -- is central to perception and scientific data analysis. Attention-based models perform well when components are approximately separable, as in object-centric vision. Under \emph{additive superposition}, however -- where multiple components contribute to every observation -- we identify a structural failure mode we term \emph{slot collapse}: multiple slots converge to the same dominant component while weaker ones remain unrepresented. We trace this to a general limitation: attention is \emph{memoryless with respect to explained evidence}. All slots repeatedly operate on the same input without accounting for what has already been explained, so gradients are dominated by the strongest component, inducing shared fixed points across slots. As a result, attention fails to enforce non-redundant allocation under additive superposition. We address this by introducing \emph{residual evidence modeling}, which tracks the remaining explanatory capacity of the input, instantiated via \emph{evidence depletion} -- a minimal modification combining multiplicative depletion with an attention bias. Controlled ablations show that parallel attention, sequential processing alone, and loss-based regularization all fail to resolve collapse; evidence depletion, which adds stateful residual tracking to sequential attention, consistently succeeds. Across synthetic benchmarks and real-world audio mixtures (FUSS), evidence depletion reduces slot collapse by up to an order of magnitude, generalizing beyond synthetic settings. On gravitational-wave source inference for the ESA/NASA LISA mission, under identical architectures, data, and losses, standard attention fails while evidence depletion prevents collapse and enables multi-source posterior estimation. These results show that under additive superposition, residual evidence tracking is the operative ingredient -- preventing collapse and enabling compositional inference.
\end{abstract}

\section{Introduction}
\label{sec:intro}

Many inference problems require decomposing observations into an unknown number of underlying components.
This setting arises across perception and scientific data analysis: object-centric visual reasoning~\citep{Locatello2020,Greff2019}, audio source separation~\citep{Hershey2016}, spectroscopic mixture analysis, and multi-object tracking in cluttered scenes share a common structure -- observations are generated by multiple latent factors whose contributions must be disentangled.

Slot attention~\citep{Locatello2020} provides a powerful mechanism for this problem, representing inputs as a set of latent components that compete via attention~\citep{Singh2022,Elsayed2022,Burgess2019}.
It has been highly successful when components are approximately separable, as in object-centric vision: different objects occupy disjoint spatial regions, and tokens can be uniquely assigned to components.
However, many domains exhibit a fundamentally different structure -- observations are formed through \emph{additive superposition}, where multiple components contribute simultaneously to every token, often with large dynamic range.

In this regime, we observe a systematic failure mode of attention-based decomposition we call \emph{slot collapse}: multiple slots converge to the same dominant component, while weaker components remain unrepresented.
We show this arises from a structural limitation of attention: all slots repeatedly attend to the \emph{same full input}, and attention gradients are dominated by the strongest component, inducing shared fixed points across slots.
Increasing capacity or enforcing sequential processing alone does not resolve collapse: the limitation is not capacity or optimization, but the absence of state over explained evidence.
Standard attention is \emph{memoryless with respect to explained evidence}: the input representation is unchanged across slots and iterations, so nothing prevents redundant allocation.
This motivates our approach: resolving slot collapse under additive superposition by introducing an explicit notion of residual explanatory capacity, which we term \emph{residual evidence modeling}.
We expect this limitation to extend to attention-based decompositions that operate on a fixed input without residual accounting, though we validate it here in the slot attention setting.

We instantiate this principle via \emph{evidence depletion}, a minimal form of stateful attention that enforces competition over residual explanatory evidence.
Each token carries a scalar representing the amount of unexplained signal.
Slots are processed sequentially, and after each slot attends, the available evidence is multiplicatively reduced, restricting subsequent slots to unexplained regions.
All slots observe the same underlying tokens; only the evidence weights change.
As a result, early-slot errors redistribute attention without corrupting the signal, unlike classical iterative subtraction which modifies the data.
A controlled ablation matched on architecture, data, and losses shows that two ingredients jointly drive the improvement: a multiplicative depletion update and an explicit attention bias toward unexplained tokens (Sec.~\ref{sec:synthetic}).

We validate evidence depletion in three settings.
On controlled synthetic benchmarks across three signal types, evidence depletion reduces slot collapse by up to an order of magnitude (Table~\ref{tab:synthetic}).
On real-world audio mixtures from the FUSS benchmark~\citep{FUSS2020}, evidence depletion reduces collapse from $0.29$ to $0.05$, confirming that the mechanism generalizes to natural signals.
On gravitational-wave source inference for the LISA mission~\citep{LISA2017} -- a scientifically demanding problem with high dynamic range and a confusion foreground -- we train three models with identical architecture, data, and losses, varying only the attention mechanism (Table~\ref{tab:lisa_ablation}).
Sequential processing alone is insufficient; evidence depletion prevents collapse and enables posterior estimation.
Combined with conditional normalizing flows~\citep{Papamakarios2021,Rezende2015}, the resulting model (\SlotFlow{}) produces approximately calibrated amortized posteriors.

\paragraph{Contributions.}
We make the following contributions:

\begin{enumerate}[leftmargin=1.5em,itemsep=1pt]
    \item \textbf{Failure mode.}
    We identify \emph{slot collapse} as a structural failure of attention under additive superposition: multiple slots converge to the same dominant component due to shared gradients from a common input (Sec.~\ref{sec:collapse}).

    \item \textbf{Mechanism.}
    We introduce \emph{evidence depletion}, a minimal modification to attention that breaks this symmetry via stateful residual tracking, targeting the failure mode identified in this work under additive superposition.
    Controlled ablations show that multiplicative depletion and an attention bias both contribute substantially to resolving collapse (Sec.~\ref{sec:evidence},~\ref{sec:synthetic}; Prop.~\ref{prop:monotone}; App.~\ref{app:gradient}).

    \item \textbf{Validation.}
    Across synthetic benchmarks, real-world audio mixtures (FUSS), and LISA gravitational-wave inference, we show that sequential attention alone fails while evidence depletion consistently prevents collapse, and that loss-based regularization is insufficient (Sec.~\ref{sec:experiments}).
\end{enumerate}
\section{Method}
\label{sec:method}

\subsection{Problem formulation}

Consider an observation $\vx \in \RR^{C \times N}$ ($C$ channels, $N$ samples) generated by $K$ parametric sources plus noise:
\begin{equation}\label{eq:observation}
    \vx = \sum_{k=1}^{K} \vh_k(\vtheta_k) + \mathbf{n}\,,
    \quad K \sim p(K),\; \vtheta_k \sim p(\vtheta),\; \mathbf{n} \text{ with known statistics.}
\end{equation}
We represent the posterior as a permutation-invariant set of per-source marginals $q_\phi(\vtheta_k \mid \vc_s(\vx))$, matched to ground truth via Hungarian assignment, and train an amortized estimator on simulated data via
\begin{equation}\label{eq:npe}
    \phi^* = \arg\min_\phi \;
    -\EE_{p(K)\,p(\vtheta \mid K)\,p(\vx \mid \vtheta)}\!\left[\min_{\sigma \in \mathcal{S}_K} \sum_{k=1}^K \log q_\phi(\vtheta_k \mid \vc_{\sigma(k)}(\vx))\right],
\end{equation}
where $\sigma$ resolves the source-to-slot assignment via Hungarian matching (Sec.~\ref{sec:hungarian}) and $\vc_s(\vx) \in \RR^{d_c}$ is a learned, slot-specific context.
The challenge is constructing $\{\vc_s\}$ such that each source receives a unique, information-preserving representation.
Slot attention is a natural candidate for this decomposition; we now examine why it fails under additive superposition.

\subsection{Failure of attention under additive superposition}
\label{sec:collapse}

Slot attention~\citep{Locatello2020} decomposes inputs into a set of latent slots via competitive cross-attention.
Given tokens $\{\vh_\ell\}_{\ell=1}^L$ and slots $\{\vs_s\}_{s=1}^S$, both in $\RR^d$, with queries $\vq_s = W_q \vs_s$ and keys $\vk_\ell = W_k \vh_\ell$, attention weights are
\begin{equation}\label{eq:standard_sa}
    \alpha_{s\ell} = \frac{\exp(\vq_s \cdot \vk_\ell / \sqrt{d})}{\sum_{s'} \exp(\vq_{s'} \cdot \vk_\ell / \sqrt{d})}\,,
\end{equation}
with normalization over slots for each token.
The softmax enforces per-token competition: slots compete to explain individual tokens, and attention weights sum to one over slots.
This is a form of \emph{assignment} -- it asks ``which slot should explain this token?'' -- and implicitly assumes that tokens can be partitioned across components.
In object-centric vision, this approximately holds: different objects occupy disjoint spatial regions, and assignment is sufficient.
Under \emph{additive superposition}, however, tokens approximately take the form $\vh_\ell \approx \sum_{k=1}^K \vh_\ell^{(k)}$ (exact in the signal domain; approximate after a nonlinear encoder), with multiple sources contributing simultaneously to each token, so tokens no longer admit a partition into components.
Slot attention enforces competition over the assignment of tokens, but not over the allocation of explanatory capacity; under additive superposition, these are not equivalent.
The competition redirects losing slots to different tokens, but when every token contains the same dominant source, different tokens do not mean different sources.
The input representation remains unchanged across slots and iterations -- there is no memory of explained evidence, so nothing prevents multiple slots from claiming the same component.

Before describing the failure mode, we make the central concept precise.

\begin{definition}[Active slot and slot collapse]\label{def:collapse}
A slot $s$ is \emph{active} on input $\vx$ if its existence-head logit (Sec.~\ref{sec:hungarian}) exceeds 0 (i.e., the slot is predicted to correspond to a real source).
\emph{Slot collapse} occurs on a multi-source input ($K \geq 2$) when two or more active slots concentrate attention on the same component.
We operationalize this via two complementary metrics:
\begin{enumerate}[leftmargin=1.5em,itemsep=0pt]
    \item[(i)] \emph{Peak overlap rate} (synthetic benchmarks): the fraction of multi-source inputs on which $\geq 2$ active slots peak at the same token.
    \item[(ii)] \emph{Max active overlap} (LISA): the maximum pairwise cosine similarity between active slot attention vectors, averaged over inputs.
\end{enumerate}
Both metrics capture the same phenomenon -- non-distinct attention patterns across active slots -- but at different granularities (formal definitions in Appendix~\ref{app:metrics}).
\end{definition}

\paragraph{Gradient dominance and shared fixed points.}
Consider a token containing contributions from two sources with amplitudes $A_1 \gg A_2$.
The attention logits $\vq_s \cdot \vk_\ell$ are dominated by the stronger component, and gradients with respect to slot parameters scale accordingly.
Because all slots attend to the same tokens, the gradient of attention with respect to slot queries is aligned across slots: $\partial \alpha_{s\ell}/\partial \vq_s \propto \alpha_{s\ell}(1 - \alpha_{s\ell}) \cdot \vk_\ell$, where $\vk_\ell = W_k \vh_\ell$ is independent of $s$.
All slots are therefore driven toward the same dominant component, inducing \emph{shared fixed points} that persist under increased capacity and sequential processing.

\paragraph{Toy analysis.}
To build intuition (App.~\ref{app:gradient} gives a full gradient analysis), consider two sources contributing additively: $\vh_\ell = A_1 \vh^{(1)} + A_2 \vh^{(2)}$ with $A_1 \gg A_2$.
Attention logits $\vq_s \cdot \vk_\ell \propto A_1 \phi^{(1)} + A_2 \phi^{(2)}$ are dominated by the $A_1$ term, and since all slots are initialized from a shared prior and observe the same tokens, they receive gradient updates pointing in the same $A_1$-dominated direction -- yielding shared fixed points.
This symmetry is induced by the shared input, not by initialization, capacity, or training procedure, and therefore persists under different seeds or sequential processing.
\emph{Attention without residual accounting fails to enforce non-redundant allocation under additive superposition.}
When source amplitudes are comparable, the gradient dominance argument weakens; empirically, evidence depletion still prevents collapse in this regime (our synthetic benchmarks use only $5$--$10\times$ amplitude ratios), but the theoretical motivation is strongest under large dynamic range.

\subsection{Sequential attention with evidence depletion}
\label{sec:evidence}

To address this limitation, we introduce a mechanism that explicitly models residual explanatory capacity.
Each token $\ell$ is associated with a scalar $e_\ell \in [\epsilon, 1]$, initialized to $e_\ell = 1$, representing unexplained evidence.
Slots are processed sequentially; for slot $s$, attention is computed as
\begin{equation}\label{eq:evidence_attn}
    \alpha_{s\ell} = \mathrm{softmax}_\ell\!\left(\frac{\vq_s \cdot \vk_\ell}{\tau\sqrt{d}} + \gamma \log e_\ell\right), \qquad \vk_\ell = W_k(e_\ell \cdot \vh_\ell),\; \vv_\ell = W_v(e_\ell \cdot \vh_\ell).
\end{equation}
Note the shift from Eq.~\eqref{eq:standard_sa}: because slots are processed sequentially, the softmax now normalizes over \emph{tokens} (each slot selects which tokens to attend to) rather than over slots.
Evidence enters through two coupled pathways: the additive $\log e_\ell$ bias suppresses explained tokens, while the scaled inputs $e_\ell \cdot \vh_\ell$ reduce information content at claimed locations.
After slot $s$ attends, evidence is updated multiplicatively:
\begin{equation}\label{eq:evidence_update}
    e_\ell \leftarrow \max\!\bigl(e_\ell \cdot (1 - \alpha_{s\ell}^2),\; \epsilon\bigr)\,,
\end{equation}
inducing monotonic reduction of available evidence at attended tokens (Eq.~\ref{eq:evidence_update} shows the quadratic form; we also evaluate linear $(1{-}\alpha)$, cubic, and binary variants in Sec.~\ref{sec:synthetic}).

\paragraph{Properties.}
All slots observe the same underlying signal; only the evidence weights change, so errors in early slots redistribute explanatory capacity but do not directly corrupt the input signal seen by later slots.
The mechanism is related to but structurally distinct from matching pursuit~\citep{Mallat1993} and \CLEAN{}~\citep{Hogbom1974}: those methods modify the data through subtraction, so estimation errors propagate into subsequent iterations as corrupted input; evidence depletion preserves the signal and modulates only attention allocation.
In the hard-attention limit ($\gamma \to \infty$), evidence depletion reduces to progressive token masking -- analogous to but distinct from classical subtraction, which removes only the estimated component while leaving other superposed components intact.
The multiplicative update ensures that tokens which strongly influence one slot cannot dominate subsequent slots, breaking the gradient symmetry identified in Sec.~\ref{sec:collapse} (we discuss when this evidence-space formulation matters empirically in Sec.~\ref{sec:synthetic}).

\begin{proposition}[Residual separation]\label{prop:monotone}
Let $e_\ell^{(s)}$ denote the evidence at token $\ell$ after processing $s$ slots. Then:
(i)~$e_\ell^{(s+1)} \leq e_\ell^{(s)}$ (monotonic decay);
(ii)~in the limit $S \to \infty$, tokens with persistent attention ($\sum_s \alpha_{s\ell}^2 = \infty$) satisfy $e_\ell^{(s)} \to \epsilon$; for finite $S$, strong attention achieves substantial depletion in few steps;
(iii)~the effective attention distribution for subsequent slots differs whenever $\alpha_{s\ell} > 0$ for any $\ell$.
\end{proposition}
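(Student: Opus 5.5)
The plan is to argue directly from the update rule in Eq.~\eqref{eq:evidence_update}, together with the fact that each $\alpha_{s\ell}$ is a softmax output over tokens and therefore lies in $[0,1]$. I will take $\epsilon \in (0,1)$ and an initial value $e_\ell^{(0)} = 1$, so that by induction every $e_\ell^{(s)}$ stays in $[\epsilon,1]$.

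For (i), note that $1-\alpha_{s\ell}^2 \in [0,1]$, which gives $e_\ell^{(s)}(1-\alpha_{s\ell}^2) \le e_\ell^{(s)}$. Since $e_\ell^{(s)} \ge \epsilon$ by the inductive invariant, we also have $\max(e_\ell^{(s)}(1-\alpha^2_{s\ell}),\epsilon) \le e_\ell^{(s)}$. This establishes monotone decay.

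For (ii), I will compare against the unclipped product $P_\ell^{(s)} = \prod_{r\le s}(1-\alpha_{r\ell}^2)$.
\begin{itemize}
\item A short induction gives $e_\ell^{(s)} = \max(P_\ell^{(s)},\epsilon)$. If the clip is active at step $r$, the claim holds because later factors are at most one. If the clip is never active, the two quantities coincide.
\item The elementary bound $1-x \le e^{-x}$ then yields $P_\ell^{(s)} \le \exp(-\sum_{r\le s}\alpha_{r\ell}^2)$.
\item Divergence of $\sum_r \alpha_{r\ell}^2$ forces $P_\ell^{(s)}\to 0$, and hence $e_\ell^{(s)}\to\epsilon$. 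Monotonicity from (i) guarantees that the limit exists.
\item For the finite-$S$ remark, I will observe that a single slot with $\alpha_{s\ell}\ge a$ multiplies the evidence by at most $1-a^2$. After $m$ such slots the evidence is at most $\max((1-a^2)^m,\epsilon)$. For example, $a=0.9$ cuts the evidence by a factor of about $5$ per step.
\end{itemize}

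For (iii), the plan is as follows.
\begin{itemize}
\item Take $\alpha_{s\ell}>0$. If additionally $e_\ell^{(s)}>\epsilon$, then $e_\ell^{(s+1)}<e_\ell^{(s)}$ strictly, so the additive bias $\gamma\log e_\ell$ in Eq.~\eqref{eq:evidence_attn} changes (for $\gamma>0$), and so do the key and value $W_k(e_\ell\vh_\ell)$, $W_v(e_\ell \vh_\ell)$.
\item The logit of token $\ell$ then shifts relative to the other tokens unless the shifts coincide across every token carrying positive attention.
\item Since softmax is invariant only under a uniform additive shift, the next slot's distribution differs from the one it would have received under the previous evidence, for fixed query.
\end{itemize}

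The main obstacle is exactly this caveat. If every token is depleted by the same factor, the bias shift is uniform and cancels in the softmax. The clip at $\epsilon$ can also freeze the evidence. I would therefore state (iii) precisely: the bias term alone changes the distribution whenever the depletion ratios $e^{(s+1)}_\ell/e^{(s)}_\ell$ are non-constant in $\ell$, and I would note that the input rescaling generically breaks any remaining degeneracy.
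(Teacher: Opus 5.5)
Your proposal is correct and rests on the same elementary facts as the paper: $1-\alpha_{s\ell}^2\in[0,1]$ for (i), $\log(1-x)\le -x$ (your $1-x\le e^{-x}$) for (ii), and the change in effective input for (iii). The difference is rigor, and it favours you.

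\textbf{Part (ii).} The paper writes $\log e_\ell^{(s)}\le-\sum_j\alpha_{j\ell}^2\to-\infty$. Taken literally this contradicts $e_\ell^{(s)}\ge\epsilon$; it holds only for the unclipped product. Your identity $e_\ell^{(s)}=\max(P_\ell^{(s)},\epsilon)$ is exactly what justifies the conclusion, and its induction is a single line: $\max(\max(P,\epsilon)\,c,\epsilon)=\max(Pc,\epsilon)$ for $c\in[0,1]$. It even shows $e_\ell^{(s)}=\epsilon$ after finitely many slots.

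\textbf{Part (i).} Using the invariant $e_\ell^{(s)}\ge\epsilon$ covers the clip, which the paper's one-liner leaves implicit.

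\textbf{Part (iii).} The paper calls it a direct consequence of (i), but (i) gives only non-strict decay, so it establishes no change at all. You are right that the literal claim needs qualification. For example, once all evidence is saturated at $\epsilon$, the update is the identity even though every $\alpha_{s\ell}>0$.

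One refinement to your qualified version: the keys $W_k(e_\ell\vh_\ell)$ move together with the bias. For fixed query, the relevant quantity is therefore the total logit shift $\Delta_\ell=(e_\ell^{(s+1)}-e_\ell^{(s)})\,\vq\cdot W_k\vh_\ell/(\tau\sqrt d)+\gamma\log(e_\ell^{(s+1)}/e_\ell^{(s)})$. The softmax changes if and only if $\Delta_\ell$ is non-constant in $\ell$. A non-constant depletion ratio does not by itself guarantee this, since the content term could cancel it in principle. So state the condition on $\Delta_\ell$, or make your ``generically'' cover both pathways together.
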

\begin{proof}
(i) follows from $(1-\alpha^2) \in [0,1]$. (ii) follows from $\log(1-x) \leq -x$, giving $\log e_\ell^{(s)} \leq -\sum_{j<s}\alpha_{j\ell}^2 \to -\infty$. (iii) is a direct consequence of (i): the effective input changes after each depletion step.
\end{proof}

These properties ensure each slot operates on a strictly different effective input.
In contrast to standard attention, where identical inputs induce identical gradients and shared solutions, evidence depletion enforces progressive specialization.
Crucially, this breaks the gradient dominance mechanism of Sec.~\ref{sec:collapse}: once a token has been strongly attended, its influence on subsequent slots is progressively suppressed, so dominant components cannot repeatedly attract multiple slots.
A qualitative visualization of this effect is shown in Figure~\ref{fig:evidence} (Appendix~\ref{app:visualization}).

Algorithm~\ref{alg:sequential_sa} summarizes the full procedure.

\begin{algorithm}[t]
\caption{Sequential slot attention with evidence depletion}
\label{alg:sequential_sa}
\begin{algorithmic}[1]
\Require Tokens $\{\vh_\ell\}_{\ell=1}^L$, $S$ slots, $I$ refinement iterations
\State $\vs_s \sim \NN(\bm{\mu}, \bm{\sigma}^2)$ for $s = 1, \ldots, S$
\State $e_\ell \leftarrow 1$ for $\ell = 1, \ldots, L$ \Comment{All signal unexplained}
\State $\pi \leftarrow \text{RandPerm}(S)$ \Comment{Random order (training); fixed canonical order at inference}
\For{$s \in \pi$}
    \For{$i = 1, \ldots, I$}
        \State $\vq_s \leftarrow W_q\,\vs_s$; $\vk_\ell \leftarrow W_k(e_\ell \cdot \vh_\ell)$; $\vv_\ell \leftarrow W_v(e_\ell \cdot \vh_\ell)$
        \State $\alpha_{s\ell} \leftarrow \mathrm{softmax}_\ell\!\bigl(\vq_s \cdot \vk_\ell / (\tau\!\sqrt{d}) + \gamma \log e_\ell\bigr)$
        \State $\vs_s \leftarrow \text{GRU}(\textstyle\sum_\ell \alpha_{s\ell}\,\vv_\ell,\; \vs_s) + \text{MLP}(\vs_s)$
    \EndFor
    \State $e_\ell \leftarrow \max(e_\ell(1 - \alpha_{s\ell}^2),\;\epsilon)$ \Comment{Deplete (quadratic; see Sec.~\ref{sec:synthetic} for variants)}
\EndFor
\State \Return slots $\{\vs_s\}$, attention $\{\alpha_{s\ell}\}$, evidence $\{e_\ell\}$
\end{algorithmic}
\end{algorithm}

\paragraph{Slot ordering.}
At training time, slot processing order is randomized via \texttt{RandPerm} in Algorithm~\ref{alg:sequential_sa} so that the model does not specialize slots to a particular index.
At inference time we use a fixed canonical order (slot index $1, \ldots, S$).
Because Hungarian matching (Sec.~\ref{sec:hungarian}) makes the loss permutation-invariant over slots, training-time randomization does not change the objective; it only diversifies the gradient signal each slot receives over the depletion sequence.

\noindent
Sequential processing increases the critical path from $O(I)$ (vanilla, parallel over slots) to $O(S \cdot I)$; in practice, with $S{=}5$ and $I{=}3$, this is a modest overhead.
The above defines how slots are allocated.
To use slot representations for downstream inference, we integrate the mechanism into a standard amortized inference pipeline.

\subsection{Integration into amortized inference}
\label{sec:hungarian}

The contribution of this work is the attention mechanism; the components below are standard and the mechanism is independent of the downstream prediction head.
Each slot's context $\vc_k$ is converted into a posterior via a conditional normalizing flow~\citep{Rezende2015,Papamakarios2021}: an invertible map $f_\phi: \RR^D \to \RR^D$ transforms a base Gaussian into $p_\phi(\vtheta \mid \vc)$ using rational-quadratic spline coupling layers~\citep{Durkan2019}.
Since slot order is arbitrary, we use Hungarian matching~\citep{Kuhn1955,Carion2020} to assign sources to slots at each training step; the cost matrix is $C_{ks} = -\log q_\phi(\vtheta_k^{\text{true}} \mid \vc_s)$.
Matched slots receive the flow NLL gradient; unmatched slots are trained toward the standard normal prior via the flow's base density ($\mathcal{L}_\text{prior}$, App.~\ref{app:lisa_training}).
An \emph{existence head} (MLP $\to$ logit) predicts whether each slot corresponds to a real source, trained with focal loss~\citep{Lin2017}.

\FloatBarrier
\section{Experiments}
\label{sec:experiments}

We evaluate evidence depletion in three settings of increasing realism: (i)~controlled synthetic benchmarks, (ii)~real-world audio mixtures (FUSS), and (iii)~LISA gravitational-wave inference.
The synthetic benchmarks are deliberately simple: they isolate the mechanism under controlled superposition.
In all settings, slot attention variants share identical architecture and loss functions, varying only the attention mechanism.
Figure~\ref{fig:synthetic} summarizes slot collapse across all five domains; detailed results follow in each subsection.

\begin{figure}[t]
\centering
\includegraphics[width=\linewidth]{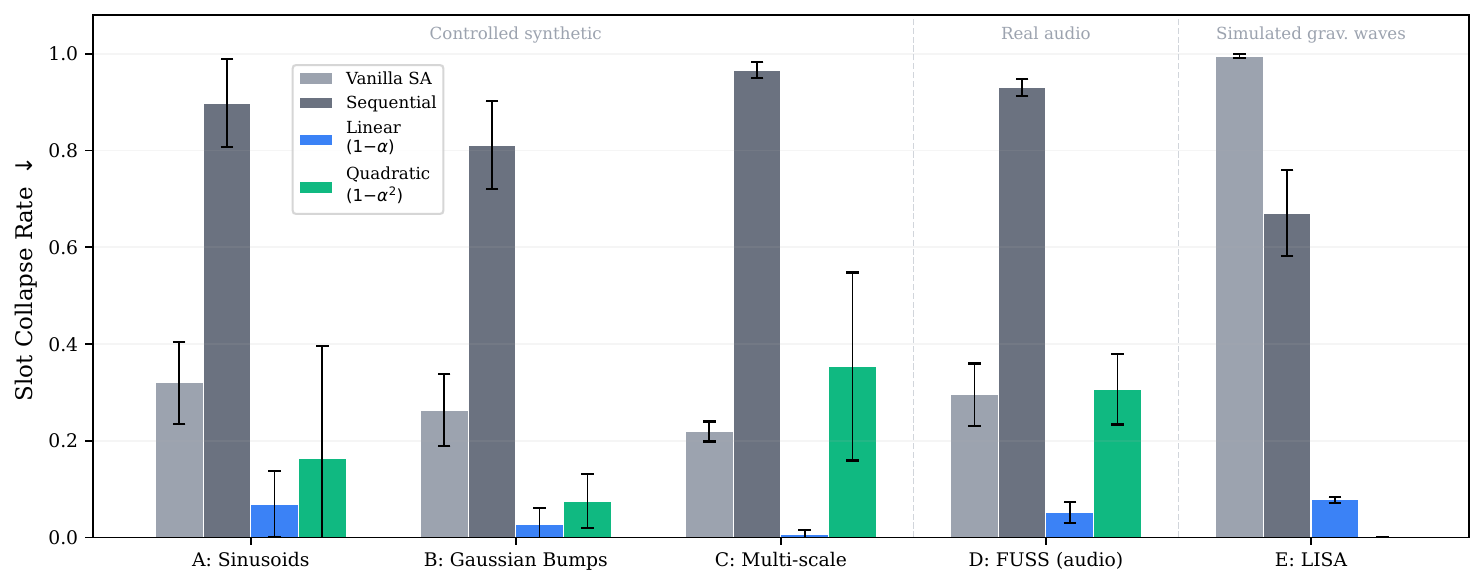}
\caption{Slot collapse across all five domains (error bars = std over 5 seeds for synthetic/FUSS, 3 seeds for LISA). Synthetic and FUSS use peak overlap rate; LISA uses max active overlap (Definition~\ref{def:collapse}). Evidence depletion (blue) consistently achieves the lowest collapse. Data-space variants in Table~\ref{tab:synthetic}.}
\label{fig:synthetic}
\end{figure}

\subsection{Synthetic benchmarks: isolating the attention mechanism}
\label{sec:synthetic}

These experiments use no auxiliary losses and isolate the attention mechanism in pure form.
We test on three signal types with variable source count ($K \sim \mathcal{U}\{1,\ldots,4\}$), one dominant source ($5$--$10\times$ amplitude), and additive Gaussian noise:
\textbf{(A)}~overlapping sinusoids (frequency-domain structure),
\textbf{(B)}~Gaussian bumps (time-domain, no periodicity),
\textbf{(C)}~multi-scale mixtures (sinusoids $\times$ Gaussian envelopes).
All use $N{=}256$ samples, 16 tokens, 5 slots, and are trained for 200 epochs on 5{,}000 samples (5 seeds each).
We compare vanilla SA~\citep{Locatello2020}, sequential SA (no residual tracking), and four evidence depletion forms -- binary (hard mask at $\alpha > 0.5$), linear $(1{-}\alpha)$, quadratic $(1{-}\alpha^2)$, and cubic $(1{-}\alpha^3)$ -- reporting the peak overlap rate of Definition~\ref{def:collapse}.
Table~\ref{tab:synthetic} shows the two best-performing forms; binary and cubic perform worse (Appendix~\ref{app:depletion_forms}).

\begin{table}[t]
\caption{Slot collapse rate (peak overlap rate; mean $\pm$ std, 5 seeds). Synthetic tasks use 200 epochs; FUSS uses 200 epochs on real audio. Data-space variants deplete tokens directly; evidence-space maintains a separate evidence variable.}
\label{tab:synthetic}
\centering\small
\begin{tabular}{@{}lcccc@{}}
\toprule
Method & Sinusoids & Gaussians & Multi-scale & FUSS \\
\midrule
Vanilla SA & $0.32 \pm 0.09$ & $0.26 \pm 0.07$ & $0.22 \pm 0.02$ & $0.29 \pm 0.06$ \\
Sequential SA & $0.90 \pm 0.09$ & $0.81 \pm 0.09$ & $0.97 \pm 0.02$ & $0.93 \pm 0.02$ \\
\midrule
Evidence linear $(1{-}\alpha)$ & $\mathbf{0.07 \pm 0.07}$ & $\mathbf{0.03 \pm 0.03}$ & $\mathbf{0.01 \pm 0.01}$ & $\mathbf{0.05 \pm 0.02}$ \\
Evidence quadratic $(1{-}\alpha^2)$ & $0.16 \pm 0.23$ & $0.08 \pm 0.06$ & $0.35 \pm 0.19$ & $0.31 \pm 0.07$ \\
\midrule
Data-space quad.\ (no bias) & $0.41 \pm 0.40$ & $0.23 \pm 0.39$ & $0.36 \pm 0.20$ & --- \\
Data-space quad.\ (with bias) & $0.18 \pm 0.30$ & $0.08 \pm 0.04$ & $0.35 \pm 0.20$ & --- \\
\bottomrule
\end{tabular}
\end{table}

Table~\ref{tab:synthetic} reports detailed results.
Three findings:
\begin{enumerate}[leftmargin=1.5em,itemsep=1pt]
    \item \textbf{Sequential SA collapses consistently} (0.81--0.97 across these tasks), confirming that sequential processing alone does not resolve slot collapse.
    \item \textbf{All evidence depletion forms reduce collapse} compared to vanilla and sequential.
    \item \textbf{The effect is consistent across domains} -- frequency, time, and heterogeneous signal types.
\end{enumerate}
Linear depletion $(1{-}\alpha)$ achieves the lowest collapse on these benchmarks.
We use quadratic $(1{-}\alpha^2)$ for the LISA experiments based on its softer exploration--commitment trade-off; whether linear would also suffice is an open question (Appendix~\ref{app:depletion_forms}).

\paragraph{How the two baselines fail differently.}
Sequential processing without residual tracking not only fails to resolve collapse but exacerbates it on synthetic benchmarks ($0.81$--$0.97$ vs.\ $0.22$--$0.32$ for vanilla), while on LISA it partially improves (Sec.~\ref{sec:lisa_ablation}).
On small problems, vanilla SA's parallel competition can find non-collapsed solutions through soft specialization, while sequential SA without residual tracking degenerates to a regime where early slots dominate allocation (``first slot wins'').
At scale, parallel competition fails entirely (NLL diverges, AngCorr $\approx$ random; Fig.~\ref{fig:ablation}), while sequential ordering gives each slot an independent gradient step -- but neither resolves collapse.

\paragraph{Evidence space vs.\ data space.}
We test an architecture-matched data-space variant -- identical sequential structure, identical quadratic $(1{-}\alpha^2)$ depletion, only the variable carrying the residual differs.
Tokens are depleted directly, $\vh_\ell \leftarrow \vh_\ell \cdot (1 - \alpha_{s\ell}^2)$, rather than via a separate evidence variable (Table~\ref{tab:synthetic}, bottom rows).
Without an attention bias, data-space depletion is unstable (std 0.39--0.40, with seeds collapsing entirely).
Adding a token-norm bias ($\gamma \log(\|\vh_\ell\| / \max_{\ell'}\|\vh_{\ell'}\|)$ in the attention logits) recovers performance to within noise of evidence-space depletion (e.g., 0.18 vs.\ 0.16 on sinusoids, 0.08 vs.\ 0.08 on Gaussians).
This isolates a clearer message than our initial framing: \emph{the multiplicative depletion structure together with an explicit attention bias is what prevents collapse.}
The multiplicative update enforces state evolution (tracking what has been explained), while the bias term ensures this state actively shapes attention allocation; both are required to break the symmetry of standard attention.
The critical ingredient is not the representation in which residuals are stored, but the presence of a mechanism that progressively redistributes explanatory capacity.
Whether the evidence-space formulation matters more in architectures with decoders that can compound subtraction errors is an open question.

\paragraph{Loss-based regularization is insufficient.}
We evaluate whether stronger loss-based regularization alone can resolve collapse, testing diversity losses ($w_\text{div}{=}5$), attention orthogonality losses ($w_\text{ortho}{=}3$), and their combination on both vanilla and sequential SA (Appendix~\ref{app:baselines}).
On vanilla SA, the best loss combination partially mitigates collapse ($0.06$/$0.11$/$0.06$ on sinusoids/Gaussians/multi-scale); evidence depletion (linear) matches or improves on every task ($0.07$/$0.03$/$0.01$) without any auxiliary loss.
On sequential SA, loss-based regularization fails entirely: even combining both losses, collapse remains at $0.30$--$0.43$.
Loss-based approaches treat the symptom; only mechanisms that modify the attention dynamics resolve collapse reliably.

\paragraph{Sensitivity to $\gamma$ and $\tau$.}
A sensitivity sweep over the bias strength $\gamma$ and softmax temperature $\tau$ on Synthetic-A is provided in Appendix~\ref{app:sensitivity}.

\subsection{Real-world audio mixtures (FUSS)}
\label{sec:fuss}

To test whether evidence depletion generalizes beyond synthetic signals, we evaluate on the Free Universal Sound Separation (FUSS) benchmark~\citep{FUSS2020}: real-world audio mixtures of 1--4 everyday sounds (speech, music, environmental noise) summed additively at 16\,kHz.
We crop to 4 seconds (64{,}000 samples), tokenize with 32 ChunkFFT tokens ($d{=}128$, 5 slots), and use spectral parameters (dominant frequency, RMS amplitude, spectral centroid) as ground truth.

Evidence depletion (linear) reduces collapse from $0.29 \pm 0.06$ (vanilla) to $0.05 \pm 0.02$ -- a ${\sim}6\times$ reduction on real audio, reaching the same range as the synthetic benchmarks (Table~\ref{tab:synthetic}).
Sequential SA collapses completely ($0.93 \pm 0.02$) despite identical architecture, hyperparameters, and training budget.
Unlike the synthetic benchmarks, FUSS contains spectrally complex real-world mixtures with heterogeneous source types; the persistence of collapse and its resolution by evidence depletion support that the failure mode is not an artifact of simplified signals but arises from memoryless attention under additive superposition.
Quadratic depletion ($0.31 \pm 0.07$) does not improve over vanilla on this benchmark: its softer update is insufficient in spectrally complex mixtures where stronger competition is required.
Full per-seed results are in Appendix~\ref{app:fuss_details}.

\subsection{Validation at scale: LISA inference}
\label{sec:lisa}

To test whether evidence depletion scales beyond simple benchmarks to a real scientific inference pipeline, we apply it to parameter estimation of overlapping galactic binary (GB) gravitational-wave sources for the Laser Interferometer Space Antenna~\citep{LISA2017}.
The setting is well suited to test the mechanism: GB signals contribute additively to every observation, exhibit large dynamic range across sources (${\sim}2$ orders of magnitude in strain amplitude), and overlap in frequency.
Each source is described by 8 parameters (frequency $f$, frequency derivative $\fdot$, amplitude $A$, inclination $\iota$, right ascension $\alpha$, declination $\delta$, polarization $\psi$, initial phase $\varphi_0$).
The signal model uses three noise-orthogonal detector channels~\citep{Tinto2021} sampled at $N{=}524{,}288$ points over 1\,year, colored noise per SciRDv1~\citep{SciRD2018}, $K_\text{resolve} \sim \mathcal{U}\{1,\ldots,4\}$ resolvable sources, and a Poisson($\lambda{=}3$) confusion foreground of sub-threshold sources.
Architecture and training details are in Appendix~\ref{app:lisa}; key attention hyperparameters are $\gamma{=}3$, $\tau{=}0.3$, $I{=}3$ refinement iterations.

\subsection{LISA ablation}
\label{sec:lisa_ablation}

The synthetic ablation (Sec.~\ref{sec:synthetic}, no auxiliary losses) isolates the attention mechanism in pure form.
This experiment tests whether the same mechanism scales to a complex inference pipeline where many components are necessary but only attention varies.
We train three models on identical data, architecture, and loss functions, varying only the slot attention mechanism:
\begin{itemize}[leftmargin=1.5em,itemsep=1pt]
    \item \textbf{Vanilla SA}: standard simultaneous competition (softmax over slots).
    \item \textbf{Sequential SA}: slots processed one at a time with softmax over tokens (not slots), no evidence masking or bias.
    \item \textbf{Evidence Depletion} (ours): sequential + evidence masking + log-evidence bias.
\end{itemize}

Table~\ref{tab:lisa_ablation} and Figure~\ref{fig:ablation} quantify the comparison across 3 seeds using the max active overlap of Definition~\ref{def:collapse}.
Vanilla SA collapses: overlap $0.99 \pm 0.00$, flow NLL diverges to $+7.1$, and CRPS remains at or above the prior, indicating failure to learn useful posteriors.
Sequential processing partially reduces overlap ($0.67 \pm 0.09$) but slots do not fully specialize.

Evidence depletion prevents collapse in this setting.
Overlap drops to $0.08 \pm 0.01$, and the model achieves the best CRPS on all reported parameters and the best flow NLL ($-6.0 \pm 0.4$); full per-parameter results are in Appendix~\ref{app:lisa_results}.
The effect is consistent across all seeds (Table~\ref{tab:lisa_ablation}).

\paragraph{Auxiliary losses.}
All three ablation models share the same loss function (App.~\ref{app:lisa}), including auxiliary terms ($\mathcal{L}_\text{div}$, $\mathcal{L}_\text{ortho}$, $\mathcal{L}_\text{ent}$) that also discourage redundant slot allocation.
The mechanism comparison therefore isolates the attention mechanism \emph{given} these auxiliary losses.
Evidence depletion's improvements on $f$, $A$, $\iota$, and $\alpha$ in Table~\ref{tab:lisa_ablation} are attributable to the attention mechanism alone (these parameters use only the primary 8D flow); the $\psi$ improvement may additionally benefit from the auxiliary 1D polarization flow, which is shared across all variants (App.~\ref{app:lisa}).
The synthetic ablation (Sec.~\ref{sec:synthetic}), which uses only the flow NLL and matching loss, further confirms that evidence depletion prevents collapse without any auxiliary regularization.
The full system trained to convergence on $10^6$ samples produces approximately calibrated posteriors (coverage within 5\,pp of nominal for constrained parameters; Table~\ref{tab:lisa_results}).
A qualitative visualization (Appendix~\ref{app:visualization}, Figure~\ref{fig:evidence}) confirms that evidence depletion produces genuinely distinct attention patterns on simulated LISA inputs (overlap 0.07 vs.\ 0.98 for vanilla).

\begin{table}[t]
\caption{LISA ablation ($10^5$ samples, 30 epochs, mean $\pm$ std over 3 seeds). Overlap = max pairwise cosine similarity between active slot attention vectors (Definition~\ref{def:collapse}).}
\label{tab:lisa_ablation}
\centering
\small
\begin{tabular}{@{}lcccccc@{}}
\toprule
Method & Overlap$\downarrow$ & NLL$\downarrow$ & CRPS $f$$\downarrow$ & CRPS $A$$\downarrow$ & CRPS $\iota$$\downarrow$ & CRPS $\alpha$$\downarrow$ \\
\midrule
Vanilla SA & $.99 {\scriptstyle\pm .00}$ & $+7.1 {\scriptstyle\pm 0.7}$ & $.34 {\scriptstyle\pm .02}$ & $.38 {\scriptstyle\pm .06}$ & $.32 {\scriptstyle\pm .08}$ & $.31 {\scriptstyle\pm .03}$ \\
Sequential & $.67 {\scriptstyle\pm .09}$ & $-4.1 {\scriptstyle\pm 0.2}$ & $.14 {\scriptstyle\pm .00}$ & $.13 {\scriptstyle\pm .00}$ & $.10 {\scriptstyle\pm .00}$ & $.12 {\scriptstyle\pm .00}$ \\
\textbf{Evid.\ Depl.} & $\mathbf{.08 {\scriptstyle\pm .01}}$ & $\mathbf{-6.0 {\scriptstyle\pm 0.4}}$ & $\mathbf{.05 {\scriptstyle\pm .01}}$ & $\mathbf{.09 {\scriptstyle\pm .01}}$ & $\mathbf{.08 {\scriptstyle\pm .00}}$ & $\mathbf{.09 {\scriptstyle\pm .00}}$ \\
\bottomrule
\end{tabular}
\end{table}

\begin{figure}[t]
\centering
\includegraphics[width=\linewidth]{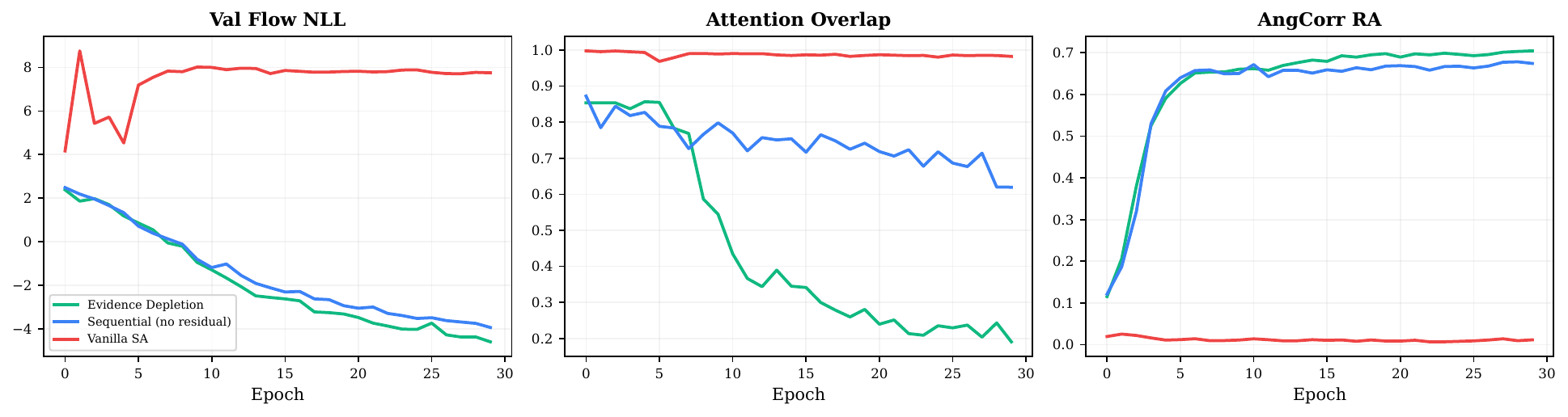}
\caption{LISA ablation: training dynamics ($10^5$ samples, 30 epochs). Vanilla SA (red) diverges with flow NLL $> 0$; sequential (blue) learns but maintains high attention overlap ($0.67$); evidence depletion (green) achieves low overlap and best metrics across all three panels.}
\label{fig:ablation}
\end{figure}

\section{Related work}
\label{sec:related}

\paragraph{Object-centric learning.}
Object-centric methods~\citep{Greff2019,Burgess2019} and slot attention~\citep{Locatello2020} with its extensions~\citep{Singh2022,Elsayed2022} decompose visual scenes into object representations.
SLATE~\citep{Singh2022SLATE} and Invariant Slot Attention~\citep{Biza2023} target representation fidelity and symmetry invariance respectively, and are orthogonal to slot collapse under additive superposition.
\citet{Krimmel2024} show that softmax normalization causes collapse under cardinality shift (more objects than seen during training); our finding under amplitude dominance is a distinct trigger with the same symptom, suggesting that slot attention's normalization is fragile under multiple conditions the original formulation did not anticipate.
MESH~\citep{Zhang2023MESH} recasts slot attention as optimal transport to break ties in object-centric settings; evidence depletion addresses a related symptom differently, changing the attention landscape after each allocation rather than directly modifying the assignment problem.
Genesis-V2~\citep{Engelcke2022} addresses slot collapse through architectural changes; our mechanism is grounded in residual dynamics and is readily applicable to sequential attention architectures.
Slot collapse under superposition can be viewed as a disentanglement failure due to insufficient inductive bias~\citep{Locatello2019}; evidence depletion provides the additional bias needed when observations are additive mixtures rather than spatial compositions.
Recent iterative and adaptive slot variants improve stability and flexibility; our work is complementary, addressing a structural failure mode under additive superposition.

\paragraph{Iterative subtraction and matching pursuit.}
\CLEAN{}~\citep{Hogbom1974} and matching pursuit~\citep{Mallat1993} decompose signals by greedily subtracting the best-matching component.
These methods modify the data through subtraction, propagating estimation errors into subsequent iterations; evidence depletion avoids this pathology by preserving the signal and modulating only attention allocation (Sec.~\ref{sec:evidence}).
In the hard-attention limit, evidence depletion reduces to progressive token masking rather than true signal subtraction.

\paragraph{Set prediction and iterative inference.}
DETR~\citep{Carion2020} introduced Hungarian matching for end-to-end set prediction.
We adopt the same matching strategy for variable-$K$ source assignment, extending it to posterior estimation via normalizing flows.
DETR-style parallel decoding with Hungarian matching is already incorporated in our vanilla slot attention baseline.
\citet{Marino2018} proposed iterative refinement of amortized posteriors; our sequential slot processing is complementary, with each slot capturing a different source.

\paragraph{Attention memory: sinks, inhibition of return, and coverage.}
In LLMs, \emph{attention sinks}~\citep{Xiao2024} describe concentration on uninformative tokens; slot collapse is conceptually related, where attention concentrates on dominant components across \emph{all} slots.
Evidence depletion has a structural analogue in \emph{inhibition of return}~\citep{Klein2000,Itti2001}, where previously attended locations are suppressed on a saliency map; unlike IOR, evidence depletion is trained end-to-end.
Coverage in neural machine translation~\citep{Tu2016} tracks cumulative attention $\sum_t \alpha_t$ to prevent over- or under-translation, sharing the principle of state over prior attention.
Evidence depletion differs structurally: coverage accumulates attention \emph{additively} over decoder time steps assuming near one-to-one alignment, while evidence depletion \emph{multiplicatively} decays per-token explanatory capacity across parallel slots in a many-to-many additive mixture.
Evidence also enters both the logit bias and the scaled keys/values, rather than logits alone.

\paragraph{Source separation and amortized inference.}
ICA~\citep{Hyvarinen2000} decomposes observations into independent sources under a linear mixing model; evidence depletion handles variable source count and produces approximately calibrated posteriors.
DINGO~\citep{Dax2021} and related work~\citep{Green2021} perform neural posterior estimation for single compact binary coalescence events.
\SlotFlow{} extends amortized inference to the multi-source setting with variable~$K$.
Existing LISA source-separation pipelines such as \texttt{GBMCMC}~\citep{Cornish2005,Littenberg2023} and global-fit approaches~\citep{Babak2008} target the full catalog with reversible-jump MCMC; our contribution is methodological rather than a replacement for these systems.

\section{Discussion and conclusion}
\label{sec:conclusion}

Standard attention is memoryless with respect to explained evidence.
Under additive superposition, this induces slot collapse: shared gradients drive multiple slots toward the same dominant component, leading to redundant allocation.
Evidence depletion resolves this by introducing a minimal form of state -- tracking residual explanatory capacity -- which breaks this symmetry and enforces progressive specialization.
Across synthetic signals, real-world audio mixtures, and LISA gravitational-wave inference, this mechanism consistently prevents collapse under identical architectures, data, and losses, where both parallel and sequential attention alone fail.
More broadly, attention-based decompositions are prone to failure when inputs do not admit a partition into components; compositional inference in such regimes benefits from explicit state over what has already been explained.
\textbf{Limitations.} Scaling to more complex settings likely requires hierarchical decomposition, as the current formulation operates at a single level of granularity.
The optimal depletion form is task-dependent and currently selected empirically; developing a principled selection criterion remains an open direction.


\bibliographystyle{plainnat}
\bibliography{slotflow_paper}


\newpage
\appendix

\section{LISA implementation details}
\label{app:lisa}

This appendix consolidates the architecture, flow context, and training details for the LISA experiments (Sec.~\ref{sec:lisa}, \ref{sec:lisa_ablation}).
All LISA data is simulated: the LISA mission is scheduled for launch in the mid-2030s and no observational data yet exists.
Waveforms are generated with JaxGB~\citep{JaxGB} using LISA orbital configurations from lisaorbits, with instrument noise following the Science Requirements Document v1~\citep{SciRD2018}.

\paragraph{Ablation validity.}
The LISA architecture is complex (21.7M parameters, dual-path encoder, 10-stream flow context, 10 loss terms), but the ablation in Sec.~\ref{sec:lisa_ablation} is designed to isolate the attention mechanism from this complexity.
All three variants (vanilla SA, sequential SA, evidence depletion) share \emph{exactly} the same tokenizer, source encoder, normalizing flow, loss function (including all auxiliary terms and weights), training data, hyperparameters, and random seed.
The only difference is the slot attention module: vanilla uses simultaneous softmax-over-slots competition, sequential processes slots one at a time with softmax over tokens, and evidence depletion adds the multiplicative update and log-evidence bias.
Any performance difference is therefore attributable to the attention mechanism under matched architecture, data, losses, and initialization.
The qualitative visualization in Appendix~\ref{app:visualization} further confirms this: the attention patterns themselves differ dramatically across variants (overlap 0.98 vs.\ 0.08), providing direct evidence that the mechanism, not the surrounding engineering, drives the improvement.
The architectural complexity reflects the difficulty of the underlying problem: LISA will observe millions of overlapping galactic binary signals in a single data stream of $524{,}288$ time samples, each source described by 8 correlated physical parameters with large dynamic range.
Extracting source-specific information from this mixture requires physics-informed feature extraction, temporal modulation encoding, and structured posterior estimation -- all orthogonal to the attention mechanism under study.

\subsection{Architecture}
\label{app:lisa_architecture}

\paragraph{Time-frequency tokenization.}
The signal ($3 \times 524{,}288$ detector-channel samples: A, E, T) is split into 4 overlapping segments, Hann-windowed, and FFT'd.
The CNN operates on 4 real-valued channels $[\text{Re}(\tilde{A}), \text{Im}(\tilde{A}), \text{Re}(\tilde{E}), \text{Im}(\tilde{E})]$ (the T channel is noise-dominated and omitted); Hz-based patch pooling produces 128 tokens ($\in \RR^{128}$) with learned positional encodings (Hz + seconds).

\paragraph{Sequential slot attention.}
$S = K_\text{max} + 1 = 5$ slots with evidence depletion ($\gamma = 3$, $\tau = 0.3$, $I = 3$ iterations, $\epsilon = 10^{-4}$).

\paragraph{Dual-path source encoder.}
Given each slot's frequency estimate $\hat{f}_s$ (produced by a differentiable regression head), a 64-bin spectral window is extracted using a Gaussian-weighted soft crop centered on $\hat{f}_s$ from 16 re-segmented time windows.
\textbf{Path 1} (physics-informed): 21 hand-crafted features per segment (power ratios, Stokes parameters, cross-phase, detrended phase) $\to$ temporal CNN $\to$ trunk MLP ($2048 \to 256$).
\textbf{Path 2} (learned): A 2D CNN on the raw 10-channel spectral window with \emph{frequency-only pooling} -- preserving the 16-step temporal orbital modulation that encodes sky position and polarization (Table~\ref{tab:encoder_ablation}).

\paragraph{Normalizing flow.}
12 RQ-spline coupling layers~\citep{Durkan2019}, 64 bins, hidden dim 640, physics-informed split: odd layers transform $(A, \iota, \psi, \varphi_0)$, even layers transform $(f, \fdot, \alpha, \delta)$.
Flow context: 778 dimensions from 10 feature streams (Table~\ref{tab:context}).
Total: 21.7M parameters.

\paragraph{Auxiliary 1D polarization flow.}
The polarization angle $\psi$ exhibits a 4-fold symmetry under $\psi \to \psi + \pi/2$ in LISA's antenna response, encoded by harmonics of period $\pi/2$.
A standard joint flow over all 8 parameters tends to average over the symmetric modes, yielding overly broad posteriors on $\psi$.
We therefore train a small auxiliary 1D normalizing flow on $\psi$ alone, conditioned on the same slot context, with the explicit harmonic encoding $(\sin(4\psi), \cos(4\psi))$ that respects this symmetry.
This auxiliary head is used only as a training-time regularizer through the $\mathcal{L}_\text{pol}$ loss term; the main posterior at evaluation comes from the primary 8-dimensional flow.
In our development runs, removing this term caused $\psi$ recovery to degrade to AngCorr $\approx 0$.

\begin{table}[h]
\caption{Encoder ablation: temporal-preserving (frequency-only pooling) vs.\ baseline (global pooling), matched epochs.}
\label{tab:encoder_ablation}
\centering
\small
\begin{tabular}{@{}lccc@{}}
\toprule
Metric & Baseline & Freq-only pool & $\Delta$ \\
\midrule
RA CRPS & 0.125 & 0.054 & $-57\%$ \\
Dec CRPS & 0.178 & 0.085 & $-52\%$ \\
Inc CRPS & 0.069 & 0.047 & $-32\%$ \\
Freq MAE ($\mu$Hz) & 0.51 & 0.26 & $-49\%$ \\
\bottomrule
\end{tabular}
\end{table}

\subsection{Flow context assembly}
\label{app:lisa_context}

The flow context vector concatenates 10 feature streams totaling 778 dimensions (Table~\ref{tab:context}).
SBI dropout is applied to the slot attention projection and frequency resolution streams to regularize against overconfidence.

\begin{table}[h]
\caption{Flow context streams (778-dim total).}
\label{tab:context}
\centering
\small
\begin{tabular}{@{}lrl@{}}
\toprule
Stream & Dim & Notes \\
\midrule
Slot attention proj.\ & 128 & SBI dropout 30\% \\
Source features proj.\ & 192 & SourceEncoder trunk \\
Modulation stats & 64 & Temporal statistics MLP \\
Anchor proj.\ & 32 & $\hat{f}_\text{norm}$, $\log P$ \\
Freq.\ resolution & 16 & SBI dropout 30\% \\
Frequency emb.\ & 32 & Frequency value \\
Angular emb.\ & 64 & AngularHead MLP \\
Angular skip & 10 & Raw sin/cos \\
Pol temporal & 48 & Stokes U + cross-phase \\
Spectro features & 192 & SpectrogramEncoder \\
\bottomrule
\end{tabular}
\end{table}

\subsection{Training}
\label{app:lisa_training}

\paragraph{Loss function.}
$\mathcal{L} = \mathcal{L}_\text{flow}(1) + \mathcal{L}_\text{exist}(3) + \mathcal{L}_\text{div}(0.5) + \mathcal{L}_{f,\text{attn}}(5) + \mathcal{L}_{f,\text{direct}}(50) + \mathcal{L}_\text{ang}(3) + \mathcal{L}_\text{prior}(0.1) + \mathcal{L}_\text{ortho}(0.3) + \mathcal{L}_\text{ent}(0.1) + \mathcal{L}_\text{pol}(1)$.
Weights in parentheses.
$\mathcal{L}_\text{flow}$: negative log-likelihood of the normalizing flow on matched (existing) slots.
$\mathcal{L}_\text{exist}$: focal loss~\citep{Lin2017} on existence-head logits, $\mathrm{FL}(p_t) = -\alpha_t (1-p_t)^\gamma \log p_t$ with $\gamma{=}2$, $\alpha{=}0.5$, label smoothing $0.01$.
$\mathcal{L}_{f,\text{direct}}$: MSE between predicted and true frequency in relative coordinates.
$\mathcal{L}_{f,\text{attn}}$: MSE on attention-weighted frequency estimates.
$\mathcal{L}_\text{ang}$: weighted MSE on angular sin/cos harmonic encodings (see below).
$\mathcal{L}_\text{div}$: mean squared cosine similarity between active slot embedding pairs,
$\mathcal{L}_\text{div} = \frac{1}{|\mathcal{P}|}\sum_{(i,j)\in\mathcal{P}} (\hat{\vs}_i^\top \hat{\vs}_j)^2$,
where $\hat{\vs}_i = \vs_i / \|\vs_i\|$ and $\mathcal{P}$ is the set of active slot pairs.
$\mathcal{L}_\text{ortho}$: mean absolute cosine similarity between active slot attention vectors, $\frac{1}{|\mathcal{P}|}\sum_{(i,j)\in\mathcal{P}} |\hat{\boldsymbol{\alpha}}_i^\top \hat{\boldsymbol{\alpha}}_j|$, optionally weighted by per-token signal power.
$\mathcal{L}_\text{ent}$: penalises high-entropy attention for active slots, $\mathcal{L}_\text{ent} = \frac{1}{|\mathcal{A}|}\sum_{s \in \mathcal{A}} H(\boldsymbol{\alpha}_s) / H_\text{max}$, where $H_\text{max} = \log L$.
$\mathcal{L}_\text{prior}$: flow NLL evaluated on standard-normal samples for unmatched (non-existing) slots, encouraging the flow to map unused contexts to a simple prior.
$\mathcal{L}_\text{pol}$: NLL of auxiliary 1D normalizing flow on $\psi$ alone, conditioned on slot context.
The auxiliary terms ($\mathcal{L}_\text{div}$, $\mathcal{L}_\text{ortho}$, $\mathcal{L}_\text{ent}$) discourage redundant slot allocation; they are held fixed across all ablation variants in Sec.~\ref{sec:lisa_ablation} so that only the attention mechanism varies.

\paragraph{Angular encoding.}
Harmonic multipliers: RA ($m{=}1$), Inc ($m{=}2$), Dec ($m{=}2$), Pol ($m{=}4$), Phase ($m{=}1$).
The $4\times$ harmonic for polarization matches the 4-fold symmetry of LISA's antenna pattern under $\psi \to \psi + \pi/2$; in preliminary experiments, $\sin(\psi)$ and $\sin(2\psi)$ encodings consistently failed to recover $\psi$.

\paragraph{Hyperparameters and resources.}
$10^5$ samples for the ablation comparison (30 epochs); $10^6$ samples for the full model (150 epochs).
Batch 128, lr $10^{-4}$, ReduceLROnPlateau (patience 10, factor 0.5), gradient clip 5.0, 4--8$\times$NVIDIA GH200 (120\,GB), $10^5$ validation samples, 21.7M parameters.

\section{LISA results at convergence}
\label{app:lisa_results}

All LISA parameters are normalized to $[0, 1]$ before computing CRPS (the expected CRPS for a uniform prior is $\approx 0.167$).
Table~\ref{tab:lisa_results} reports parameter recovery for the evidence depletion model trained to convergence on $10^6$ samples (150 epochs).
Frequency achieves CRPS $= 0.015$ (91\% compression versus the prior); amplitude, inclination, right ascension, and declination all reach CRPS $\leq 0.05$.
The initial phase $\varphi_0$ returns a prior-width posterior (CRPS $= 0.168 \approx 0.167$), as expected for an unconstrained parameter.
The frequency derivative $\fdot$ produces a broad posterior (CRPS $= 0.335$, above the prior baseline), indicating miscalibration from encoder compression (Appendix~\ref{app:fdot}).

Polarization ($\psi$) achieves partial recovery (AngCorr 0.72), limited by a physical degeneracy with sky position through LISA's orbital modulation~\citep{Cornish2003}.
Here, AngCorr denotes the angular correlation: the Pearson correlation between the predicted and true $(\sin(m\theta), \cos(m\theta))$ harmonic encodings, where $m$ is the parameter-specific harmonic multiplier (App.~\ref{app:lisa_training}).

\paragraph{Existence prediction and completeness.}
The existence head's 79\% accuracy and the 100\% source completeness numbers measure two different things and should not be conflated.
\emph{Source completeness} -- every true source is matched to at least one slot via the Hungarian assignment -- is 100\% across $K = 1$--$4$.
This metric is less stringent: with $S = 5$ slots and $K \leq 4$, completeness is essentially the model's ability to use slots non-redundantly, which evidence depletion enforces.
\emph{Existence-head accuracy} of 79\% measures how often the head correctly predicts which slots correspond to real sources.
Errors here skew toward false positives: the head occasionally flags an unmatched ``empty'' slot as active.
This is consistent with the focal-loss training and the practice of running with $S > K_\text{max}$.
For downstream use, this bias is the desirable direction -- false positives can be filtered by posterior width or amplitude threshold; false negatives would hide sources.
Frequency MAE degrades gracefully from $0.13\,\mu$Hz ($K{=}1$) to $0.27\,\mu$Hz ($K{=}4$).

\begin{table}[h]
\caption{LISA parameter recovery at convergence ($10^6$ samples, 150 epochs, realistic noise + confusion foreground). CRPS prior $\approx 0.167$ for uniform.}
\label{tab:lisa_results}
\centering
\small
\begin{tabular}{@{}lccccl@{}}
\toprule
Parameter & CRPS & AngCorr & Cov50\% & Cov90\% & Status \\
\midrule
Frequency $f$ & 0.015 & --- & 0.45 & 0.87 & Learned \\
Freq.\ deriv.\ $\fdot$ & 0.335 & --- & 0.49 & 0.89 & Miscal. \\
Amplitude $A$ & 0.050 & --- & 0.49 & 0.89 & Learned \\
Inclination $\iota$ & 0.042 & 0.80 & 0.48 & 0.89 & Learned \\
Right asc.\ $\alpha$ & 0.043 & 0.83 & 0.47 & 0.88 & Learned \\
Declination $\delta$ & 0.073 & 0.81 & 0.47 & 0.88 & Learned \\
Polarization $\psi$ & 0.147 & 0.72 & 0.51 & 0.89 & Partial \\
Phase $\varphi_0$ & 0.168 & --- & 0.51 & 0.91 & Prior \\
\midrule
\multicolumn{6}{@{}l}{\scriptsize Freq.\ MAE: $0.13\,\mu$Hz ($K{=}1$) to $0.27\,\mu$Hz ($K{=}4$). Exist.\ head accuracy 0.79; source completeness 1.00.} \\
\bottomrule
\end{tabular}
\end{table}

\section{Encoder compression and the limits of amortization}
\label{app:fdot}

A general property of any amortized inference system that compresses observations through an encoder is that information distributed thinly across many samples can be lost.
This appendix uses the gravitational-wave frequency derivative $\fdot$ as a concrete instance to illustrate the point; the observation generalizes beyond the LISA setting to any encoder-based simulation-based inference (SBI) system.

For sources with $\fdot \sim 10^{-16}$\,Hz/s, the phase accumulation over a 1-year observation is $\Delta \Phi = \pi \fdot T^2 \approx 0.3$\,rad, which is detectable in coherent template matching but is distributed essentially uniformly across $N = 524{,}288$ samples.
Our encoder reduces this to 16 detrended phase measurements per slot before passing to the source encoder; a subsequent $8{:}1$ compression discards the subtle quadratic phase curvature amid stronger angular gradients.
The model produces a broad posterior on $\fdot$ (CRPS $= 0.335$, above the uniform-prior baseline of $0.167$), indicating that encoder compression not only discards $\fdot$ information but introduces a bias -- the posterior is miscalibrated rather than merely uninformative.

The observation suggests a design principle for amortized multi-source inference: \emph{parameters whose information is distributed across many samples may require either matched-filter features or uncompressed representations}.
Resolving this is orthogonal to the slot collapse problem and applies to any encoder-based SBI architecture, especially when scaling to long signals or settings with weak, distributed evidence.

\section{Gradient analysis under additive superposition}
\label{app:gradient}

Consider the attention logit for slot $s$ at token $\ell$ containing two additive sources: $\vh_\ell = A_1 \vh^{(1)} + A_2 \vh^{(2)}$ with $A_1 \gg A_2$.
Through the key projection $\vk_\ell = W_k \vh_\ell$, the logit becomes $\vq_s \cdot \vk_\ell = A_1 (\vq_s \cdot W_k \vh^{(1)}) + A_2 (\vq_s \cdot W_k \vh^{(2)})$.
The gradient of the softmax attention with respect to slot parameters $\vq_s$ is (up to the scaling constant $1/(\tau\sqrt{d})$):
\[
\frac{\partial \alpha_{s\ell}}{\partial \vq_s} \propto \alpha_{s\ell}(1 - \alpha_{s\ell}) \cdot \vk_\ell = \alpha_{s\ell}(1 - \alpha_{s\ell}) \cdot (A_1 W_k \vh^{(1)} + A_2 W_k \vh^{(2)})\,.
\]
When $A_1 \gg A_2$, the gradient \emph{direction} is dominated by the $A_1$ term regardless of the scaling or softmax normalization.
Crucially, this gradient direction is the same for all slots: since slots are initialized from a shared prior and observe the same tokens, they receive updates pointing in the same $A_1$-dominated direction.
After several iterations, all slots converge to the same representation -- not because they lack capacity, but because the optimization landscape presents identical gradients to each slot.
This is the cross-slot symmetry that produces shared fixed points and slot collapse.

Evidence depletion breaks this symmetry by reducing $e_\ell$ after the first slot attends, which attenuates $\vk_\ell = W_k(e_\ell \vh_\ell)$ and removes the gradient dominance of $A_1$ for subsequent slots.
Each slot now faces a different gradient landscape, preventing convergence to a shared fixed point.

\section{Sensitivity to $\gamma$ and $\tau$}
\label{app:sensitivity}

We sweep the bias strength $\gamma \in \{1, 3, 10\}$ and softmax temperature $\tau \in \{0.1, 0.3, 1.0\}$ on Synthetic-A (sinusoids), reporting peak overlap rate (Definition~\ref{def:collapse}, mean $\pm$ std over 3 seeds).

\begin{table}[h]
\caption{Sensitivity to $\gamma$ and $\tau$ on Synthetic-A (sinusoids), using quadratic depletion $(1{-}\alpha^2)$. Peak overlap rate (mean $\pm$ std, 3 seeds). Default values used in the main paper: $\gamma{=}3$, $\tau{=}0.3$.}
\label{tab:sensitivity}
\centering\small
\begin{tabular}{@{}lccc@{}}
\toprule
& $\tau = 0.1$ & $\tau = 0.3$ & $\tau = 1.0$ \\
\midrule
$\gamma = 1$  & $0.35 \pm 0.45$ & $\mathbf{0.03 \pm 0.01}$ & $0.76 \pm 0.17$ \\
$\gamma = 3$  & $0.31 \pm 0.42$ & $0.06 \pm 0.05$ & $0.69 \pm 0.33$ \\
$\gamma = 10$ & $\mathbf{0.02 \pm 0.01}$ & $0.03 \pm 0.01$ & $0.26 \pm 0.25$ \\
\bottomrule
\end{tabular}
\end{table}

The dominant trend is the temperature $\tau$: moderate values ($\tau{=}0.3$) yield consistently low collapse across all $\gamma$, while high temperature ($\tau{=}1.0$) washes out the bias and causes collapse in most settings.
At low temperature ($\tau{=}0.1$), the method works well with strong bias ($\gamma{=}10$, collapse $0.02$) but becomes unstable at weaker bias (std $> 0.4$), likely because sharp attention concentrates on noise fluctuations before the bias can redirect.
The default configuration ($\gamma{=}3$, $\tau{=}0.3$) lies in the stable low-collapse regime; strong bias ($\gamma{=}10$) at $\tau{=}0.3$ performs comparably ($0.03$).
These results confirm that evidence depletion is robust across a moderate range of hyperparameters and does not require careful tuning.

\section{Collapse metrics}
\label{app:metrics}

We define the two collapse metrics from Definition~\ref{def:collapse} formally.
Let $\alpha_s \in \RR^L$ denote the attention vector of slot $s$ over $L$ tokens.

\paragraph{Peak overlap rate (synthetic and FUSS).}
For each multi-source input with $K \geq 2$ ground-truth sources, we consider the first $K$ slots (ordered by slot index) and compute each slot's peak token $p_s = \arg\max_\ell \alpha_{s\ell}$.
The input exhibits collapse if any two of these slots share a peak: $\exists\, s \neq s' \in \{1,\ldots,K\}: p_s = p_{s'}$.
The peak overlap rate is the fraction of multi-source inputs that exhibit collapse:
\[
    \text{PeakOverlap} = \frac{1}{|\{i : K_i \geq 2\}|} \sum_{i:\, K_i \geq 2} \mathbf{1}\!\left[\exists\, s \neq s' \in \{1,\ldots,K_i\} : p_s^{(i)} = p_{s'}^{(i)}\right].
\]

\paragraph{Max active overlap (LISA).}
Let $\mathcal{A} = \{s : \sigma(\text{exist-logit}(s)) > 0.5\}$ be the set of predicted active slots.
For each input with $|\mathcal{A}| \geq 2$, compute the maximum pairwise cosine similarity between active slot attention vectors:
\[
    \text{MaxOverlap} = \max_{\substack{s, s' \in \mathcal{A} \\ s \neq s'}} \frac{\alpha_s \cdot \alpha_{s'}}{\|\alpha_s\|\,\|\alpha_{s'}\|}\,,
\]
averaged over inputs. This captures the worst-case pair of collapsing slots and is more informative than peak overlap when $L$ is large (128 tokens for LISA) and attention patterns can be similar without sharing a single peak.

\section{Additional depletion forms}
\label{app:depletion_forms}

Table~\ref{tab:synthetic} in the main text reports the two best-performing depletion forms (linear and quadratic). For completeness, Table~\ref{tab:all_forms} includes binary and cubic variants.

\begin{table}[h]
\caption{All depletion forms: slot collapse rate (mean $\pm$ std, 5 seeds, 200 epochs).}
\label{tab:all_forms}
\centering\small
\begin{tabular}{@{}lccc@{}}
\toprule
Method & Sinusoids & Gaussians & Multi-scale \\
\midrule
Binary (hard mask at $\alpha > 0.5$) & $0.30 \pm 0.27$ & $0.41 \pm 0.19$ & $0.71 \pm 0.26$ \\
Linear $(1{-}\alpha)$ & $\mathbf{0.07 \pm 0.07}$ & $\mathbf{0.03 \pm 0.03}$ & $\mathbf{0.01 \pm 0.01}$ \\
Quadratic $(1{-}\alpha^2)$ & $0.16 \pm 0.23$ & $0.08 \pm 0.06$ & $0.35 \pm 0.19$ \\
Cubic $(1{-}\alpha^3)$ & $0.11 \pm 0.05$ & $0.26 \pm 0.32$ & $0.52 \pm 0.13$ \\
\bottomrule
\end{tabular}
\end{table}

Binary depletion performs worst due to the discontinuous threshold; cubic is intermediate. The monotonic trend (linear $>$ quadratic $>$ cubic $>$ binary) on Gaussians and multi-scale suggests that stronger depletion generally helps, though cubic outperforms quadratic on sinusoids ($0.11$ vs.\ $0.16$).

\section{Loss-based baselines}
\label{app:baselines}

We test whether stronger loss-based regularization can resolve slot collapse without modifying the attention mechanism.
Six baselines combine vanilla or sequential SA with stronger diversity losses ($w_\text{div}{=}5.0$, vs.\ the default $0.5$), attention orthogonality losses ($w_\text{ortho}{=}3.0$), or both.
All use identical architecture, data, and training budget as the main ablation (200 epochs, 5 seeds).

\begin{table}[h]
\caption{Loss-based baselines: slot collapse rate (mean $\pm$ std, 5 seeds, 200 epochs). Reference: evidence depletion (linear) achieves 0.07/0.03/0.01 on these tasks without auxiliary losses.}
\label{tab:baselines}
\centering\small
\begin{tabular}{@{}llccc@{}}
\toprule
SA variant & Loss config & Sinusoids & Gaussians & Multi-scale \\
\midrule
Vanilla & $w_\text{div}{=}5$ & $0.14 \pm 0.04$ & $0.20 \pm 0.04$ & $0.10 \pm 0.02$ \\
Vanilla & $w_\text{ortho}{=}3$ & $0.08 \pm 0.02$ & $0.10 \pm 0.03$ & $0.07 \pm 0.02$ \\
Vanilla & Both & $0.06 \pm 0.03$ & $0.11 \pm 0.07$ & $0.06 \pm 0.01$ \\
\midrule
Sequential & $w_\text{div}{=}5$ & $0.74 \pm 0.16$ & $0.68 \pm 0.05$ & $0.85 \pm 0.07$ \\
Sequential & $w_\text{ortho}{=}3$ & $0.45 \pm 0.06$ & $0.51 \pm 0.08$ & $0.49 \pm 0.04$ \\
Sequential & Both & $0.30 \pm 0.04$ & $0.36 \pm 0.03$ & $0.43 \pm 0.02$ \\
\midrule
\multicolumn{2}{@{}l}{\textbf{Evidence linear} (no aux.\ loss)} & $\mathbf{0.07 \pm 0.07}$ & $\mathbf{0.03 \pm 0.03}$ & $\mathbf{0.01 \pm 0.01}$ \\
\bottomrule
\end{tabular}
\end{table}

Loss-based regularization partially reduces collapse on vanilla SA (e.g., $0.32 \to 0.06$ on sinusoids with both losses) but completely fails on sequential SA ($0.90 \to 0.30$ at best).
Evidence depletion achieves lower collapse \emph{without any auxiliary loss}, confirming that the mechanism -- not the regularization -- is the operative ingredient.

\paragraph{Scope of comparisons.}
This paper studies slot collapse under \emph{additive superposition}, where every token contains contributions from all sources simultaneously.
This regime is fundamentally different from object-centric vision (CLEVR, Multi-dSprites, MOVi), where different objects occupy disjoint spatial regions and tokens can be uniquely assigned to components.
Methods designed for the spatial regime -- such as MESH~\citep{Zhang2023MESH}, which modifies the assignment problem via optimal transport -- address a different failure mode (tie-breaking under spatial ambiguity) and are not directly applicable to the additive setting.
We therefore compare against baselines that share the same problem structure: vanilla and sequential slot attention (which face additive superposition but lack residual tracking) and loss-based regularization (which attempts to resolve collapse without modifying the attention dynamics).
This controlled comparison isolates the contribution of the attention mechanism itself, which is the paper's core contribution.

\section{FUSS benchmark details}
\label{app:fuss_details}

The Free Universal Sound Separation (FUSS) benchmark~\citep{FUSS2020} contains real-world audio mixtures of 1--4 everyday sounds (speech, music, environmental noise, etc.) summed additively at 16\,kHz in 10-second clips.
We crop to 4 seconds (64{,}000 samples), tokenize with 32 ChunkFFT chunks (FFT, project to $d{=}128$), and extract 3 ground-truth parameters per source: dominant frequency, RMS amplitude, and spectral centroid (all normalized to $[0,1]$).
Table~\ref{tab:fuss_full} reports per-seed results.

\begin{table}[h]
\caption{FUSS results: slot collapse rate per seed (200 epochs, 5{,}000 train / 1{,}000 val, 32 tokens, $d{=}128$).}
\label{tab:fuss_full}
\centering\small
\begin{tabular}{@{}lcccccc@{}}
\toprule
Method & Seed 0 & Seed 1 & Seed 2 & Seed 3 & Seed 4 & Mean $\pm$ std \\
\midrule
Vanilla SA & 0.22 & 0.38 & 0.34 & 0.31 & 0.22 & $0.29 \pm 0.06$ \\
Sequential SA & 0.94 & 0.91 & 0.94 & 0.91 & 0.95 & $0.93 \pm 0.02$ \\
Evidence linear & 0.09 & 0.04 & 0.04 & 0.04 & 0.04 & $\mathbf{0.05 \pm 0.02}$ \\
Evidence quadratic & 0.42 & 0.27 & 0.25 & 0.36 & 0.23 & $0.31 \pm 0.07$ \\
\bottomrule
\end{tabular}
\end{table}

Quadratic depletion $(1{-}\alpha^2)$ does not improve over vanilla on FUSS ($0.31$ vs.\ $0.29$) -- it is effectively inert on spectrally complex audio.
Linear depletion $(1{-}\alpha)$ achieves a ${\sim}6\times$ reduction ($0.29 \to 0.05$).

\paragraph{Practical guidance on depletion form.}
Across all synthetic benchmarks and FUSS, linear depletion matches or outperforms quadratic.
We use quadratic for LISA based on its softer update; a direct comparison on LISA is left to future work.
We recommend linear $(1{-}\alpha)$ as the starting point; quadratic $(1{-}\alpha^2)$ is worth trying when the downstream task requires calibrated density estimation (e.g., normalizing flows) rather than just source separation.
Developing a principled selection criterion -- e.g., based on the spectral entropy of the input or the number of sources -- is an open direction.

\section{Visualizing evidence depletion on a simulated LISA input}
\label{app:visualization}

\begin{figure}[t!]
\centering
\includegraphics[width=\linewidth,trim=0 0 0 15,clip]{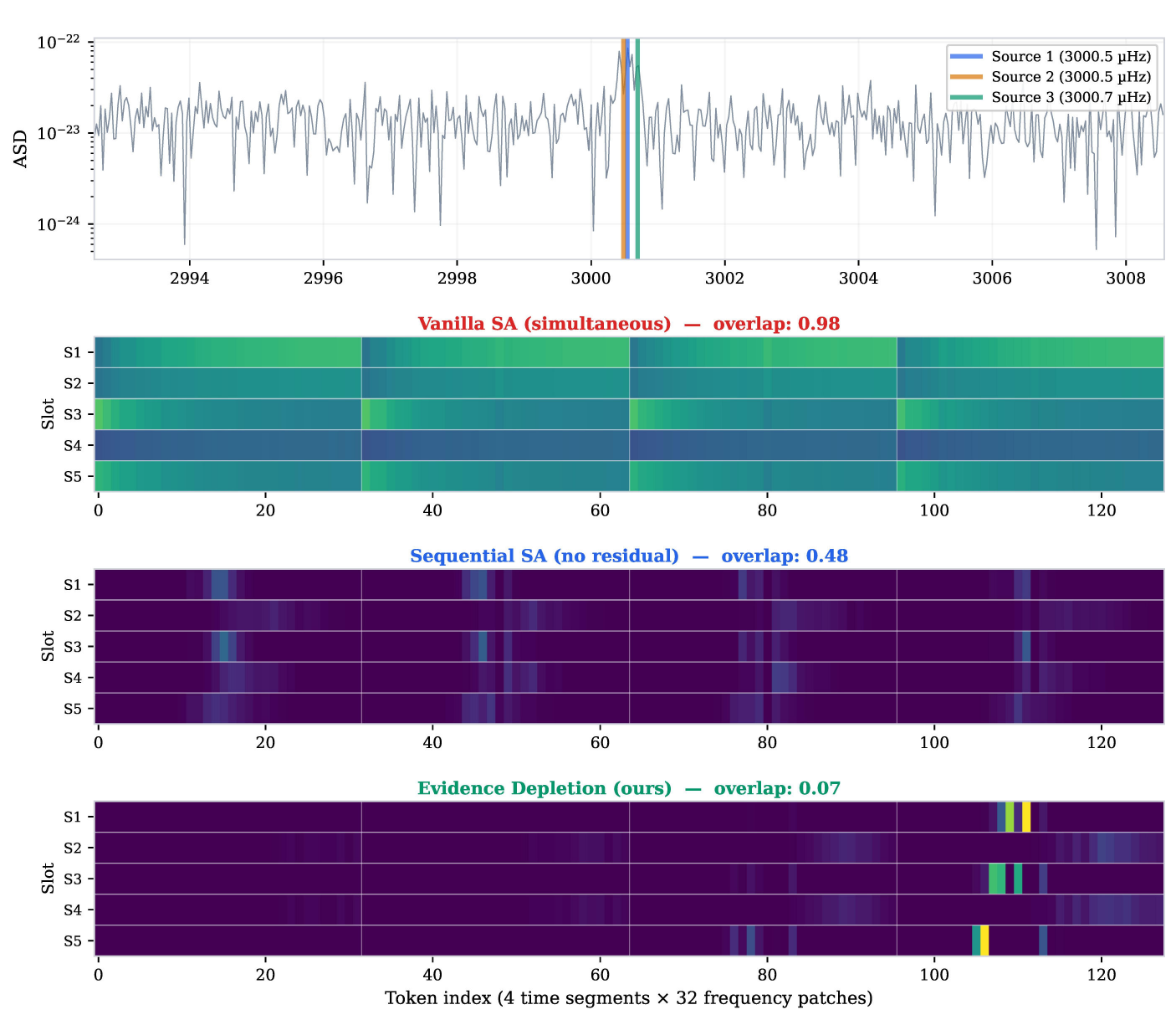}
\caption{\textbf{Evidence depletion resolves slot collapse by breaking shared-gradient symmetry.}
\textit{Top:} Amplitude spectral density of a $K{=}3$ input with three nearly co-frequency sources ($\Delta f < 1\,\mu$Hz), forming a single unresolved feature.
\textit{Bottom:} Attention heatmaps (5 slots $\times$ 128 tokens) for three variants.
\textbf{Vanilla SA:} All slots attend nearly uniformly (overlap 0.98). Because every slot observes the same input, gradients are dominated by the same component, driving slots to identical solutions (collapse).
\textbf{Sequential SA:} Processing slots one at a time introduces ordering but not state; all slots still attend to the same tokens (overlap 0.48), reflecting persistent shared gradients.
\textbf{Evidence depletion:} After each slot, attended tokens are downweighted via a residual evidence variable, changing the effective input seen by subsequent slots. This forces subsequent slots to extract complementary structure from the same tokens (overlap 0.07), as the residual evidence alters the effective input and breaks shared gradient directions, here exploiting temporal modulation induced by LISA's orbital motion rather than frequency separation.
Token indices: 4 temporal segments $\times$ 32 frequency patches; segment boundaries shown as white lines.}
\label{fig:evidence}
\end{figure}

Figure~\ref{fig:evidence} illustrates the gradient-symmetry argument of Sec.~\ref{sec:collapse} on a simulated LISA input: a $K{=}3$ sample with three sources at nearly identical frequencies ($\Delta f < 1\,\mu$Hz), all contributing to a single unresolved spectral feature.
The attention heatmaps (5 slots $\times$ 128 tokens, organized as 4 temporal segments $\times$ 32 frequency patches) show how each variant handles this shared-input regime.

Vanilla SA (overlap 0.98) exhibits exactly the failure mode predicted by the memoryless-attention analysis: all slots distribute attention uniformly because the dominant component attracts every slot equally, and there is no mechanism to redirect subsequent slots.
Sequential SA (overlap 0.48) provides some gradient diversity through ordering but does not break the shared fixed points -- all slots still attend to the same tokens.
With evidence depletion (overlap 0.07), the picture changes qualitatively: once the bias suppresses already-attended frequency bins, remaining slots discover complementary structure along LISA's orbital modulation axis -- the sources are unresolved in frequency but separable in time via direction-dependent antenna patterns.
This demonstrates that evidence depletion does not merely reduce overlap numerically but redirects attention to genuinely complementary features, consistent with the structural argument that breaking gradient symmetry enables slots to specialize.

\end{document}